\documentclass[sigconf,authorversion]{acmart}

\usepackage{amsmath,amsthm,mathtools}
\usepackage{graphicx} 
\usepackage{xspace}
\usepackage{color}
\usepackage[linesnumbered,ruled]{algorithm2e}

\usepackage{tikz}
\usetikzlibrary{decorations.pathreplacing}

\AtBeginDocument{%
  }

\newcommand{\oea}{$(1 + 1)$~EA\xspace}
\newcommand{\olea}{$(1 + \lambda)$~EA\xspace}
\newcommand{\ollga}{$(1 + (\lambda, \lambda))$~GA\xspace}
\newcommand{\onemax}{\textsc{OneMax}\xspace}

\DeclareMathOperator*{\argmax}{arg\,max}
\DeclareMathOperator{\Bin}{Bin}
\DeclareMathOperator{\Geom}{Geom}

\copyrightyear{2026}
\acmYear{2026}
\setcopyright{cc}
\setcctype{by}
\acmConference[GECCO '26]{Genetic and Evolutionary Computation Conference}{July 13--17, 2026}{San Jose, Costa Rica}
\acmBooktitle{Genetic and Evolutionary Computation Conference (GECCO '26), July 13--17, 2026, San Jose, Costa Rica}
\acmDOI{10.1145/3795095.3805156}
\acmISBN{979-8-4007-2487-9/2026/07}

\begin{document}

\title[When Switching Algorithms Helps: A Theoretical Study of Online Algorithm Selection]{When Switching Algorithms Helps:\\A Theoretical Study of Online Algorithm Selection}

\author{Denis Antipov}
\orcid{0000-0001-7906-096X}
\affiliation{
  \institution{Sorbonne Universit\'e, CNRS, LIP6}
  \city{Paris}
  \country{France}}
\email{denis.antipov@lip6.fr}

\author{Carola Doerr}
\orcid{0000-0002-4981-3227}
\affiliation{
  \institution{Sorbonne Universit\'e, CNRS, LIP6}
  \city{Paris}
  \country{France}}
\email{carola.doerr@lip6.fr}

\begin{abstract}
  Online algorithm selection (OAS) aims to adapt the optimization process to changes in the fitness landscape and is expected to outperform any single algorithm from a given portfolio. Although this expectation is supported by numerous empirical studies, there are currently no theoretical results proving that OAS can yield asymptotic speedups (apart from some artificial examples for hyper-heuristics). Moreover, theory-based guidelines for when and how to switch between algorithms are largely missing.

  In this paper, we present the first theoretical example in which switching between two algorithms---the $(1+\lambda)$~EA and the $(1+(\lambda,\lambda))$~GA---solves the OneMax problem asymptotically faster than either algorithm used in isolation. We show that an appropriate choice of population sizes for the two algorithms allows the optimum to be reached in $O(n\log\log n)$ expected time, faster than the $\Theta(n\sqrt{\frac{\log n \log\log\log n}{\log\log n}})$ runtime of the best of these two algorithms with optimally tuned parameters.

  We first establish this bound under an idealized switching rule that changes from the $(1+\lambda)$ to the $(1+(\lambda,\lambda))$~GA at the optimal time. We then propose a realistic switching strategy that achieves the same performance. Our analysis combines fixed-start and fixed-target perspectives, illustrating how different algorithms dominate at different stages of the optimization process. This approach offers a promising path toward a deeper theoretical understanding of OAS.
\end{abstract}

\begin{CCSXML}
<ccs2012>
   <concept>
       <concept_id>10003752.10010070.10011796</concept_id>
       <concept_desc>Theory of computation~Theory of randomized search heuristics</concept_desc>
       <concept_significance>500</concept_significance>
       </concept>
   <concept>
       <concept_id>10003752.10010061.10011795</concept_id>
       <concept_desc>Theory of computation~Random search heuristics</concept_desc>
       <concept_significance>500</concept_significance>
       </concept>
   <concept>
       <concept_id>10003752.10003809.10003716.10011136.10011797.10011799</concept_id>
       <concept_desc>Theory of computation~Evolutionary algorithms</concept_desc>
       <concept_significance>500</concept_significance>
       </concept>
   <concept>
       <concept_id>10010147.10010257.10010293.10011809.10011812</concept_id>
       <concept_desc>Computing methodologies~Genetic algorithms</concept_desc>
       <concept_significance>500</concept_significance>
       </concept>
 </ccs2012>
\end{CCSXML}

\ccsdesc[500]{Theory of computation~Theory of randomized search heuristics}
\ccsdesc[500]{Theory of computation~Random search heuristics}
\ccsdesc[500]{Theory of computation~Evolutionary algorithms}
\ccsdesc[500]{Computing methodologies~Genetic algorithms}

\keywords{Evolutionary algorithms, Online algorithm selection, Theory, Runtime analysis}

\maketitle

\section{Introduction}


\emph{Online algorithm selection} (OAS for brevity) is an approach to optimization based on switching between several optimization algorithms from a pre-defined portfolio in order to maximize the performance.\footnote{OAS is sometimes referred to as \emph{dynamic algorithm selection} (dynAS), but we prefer to use different notation to avoid confusion with \emph{dynamic algorithm configuration} (DAC) that involves some learning prior to running an algorithm.} This approach is especially helpful for iterative random search heuristics such as evolutionary algorithms (EAs), since it is easy to switch from one algorithm to another between iterations~\cite{VermettenWBD20,KostovskaJVNWED22,VermettenWSH23}. The ultimate goal of OAS is to always use the algorithm that yields the fastest progress at each stage of optimization, since it must outperform all algorithms from the portfolio.

The main challenge of OAS lies in designing effective switching strategies. This includes detecting when the currently employed algorithm ceases to be effective and determining which algorithm from the portfolio is best suited to continue the search. In practice, these challenges are typically addressed using learning-based techniques. However, any theory-based guidelines on the how to switch between algorithms in OAS settings are still largely missing.

The closest area to OAS that theory of evolutionary computation has studied before is the hyper-heuristics (HHs). HHs are usually embedded into EAs to select one or more of its components in each iteration. This includes HHs that select the operator to create offspring~\cite{AlanaziL14,LissovoiOW17,DoerrLOW18,LissovoiOW20}, parent selection strategy~\cite{CorusOZ25} or offspring selection strategy~\cite{LissovoiOW23,BendahiDFL25}. Most HHs use a simple strategy to switch between low-level heuristics, such as iteration through their portfolio in randomly shuffled order, using each heuristic from portfolio as long as it is successful, etc. Some of them involve more complex mechanisms, for example, they use a heuristic for some time $\tau$ to learn its usefulness~\cite{DoerrLOW18} or use a Markov chain to transition between different heuristics~\cite{BendahiDFL25}. The latter two HHs are very close to OAS, where the goal is to \emph{learn} the usefulness of each algorithm from the portfolio and to switch between them at the best moments.

\textbf{Our results.} In this paper, we address the lack of theoretical foundations for OAS by performing runtime analysis of an OAS algorithm that can switch between the \olea and the \ollga. The idea of such portfolio is inspired by the recent result of~\citet{AntipovBD25telo} who showed that the \ollga is much more efficient in the late stages of optimization on \onemax than any mutation-only EA and that unlike them it does not encounter the coupon collector effect to the same extent. At the same time, in early stages of optimization the \ollga wastes too many fitness evaluations on just a small progress that can be achieved with mutation. This implies that it makes sense to use the \olea (or even the \oea) in early stages of optimization, and switch to the \ollga later. We show that this OAS approach allows to find the optimum of \onemax in $O(n\log\log n)$ fitness evaluations. This is asymptotically faster than the $\Omega(n\log n)$ bound of all mutation-only EAs~\cite{LehreW12} and $\Theta(n\sqrt{\frac{\log n \log\log\log n}{\log\log n}})$ runtime of the \ollga with the best static parameters~\cite{DoerrD18}. This is the first proven example of OAS, where we observe that switching between two different algorithms brings performance that is asymptotically better than the performance of the best of these two algorithms. 

We first prove this runtime bound for the algorithm that knows its distance from the optimum and can decide to switch based on this information. Interestingly, the range of distances at which it can switch to achieve the optimal runtime is quite large, namely it spreads over $\Theta(\frac{n(\log\log n)^2}{\log n})$ different distances, which gives the OAS algorithm a lot of freedom to decide when to do it. To prove this result, we use the fixed-start analysis results for the \ollga from~\cite{AntipovBD25telo}, and also provide the fixed-target bounds for the \olea. 

Since in practice it is not realistic to have access to the current distance to the optimum, we provide a mechanism that detects the optimal time to switch from the \oea to the \ollga. This mechanism is based on the budget requirements of these algorithms, and it decides to switch when the \oea needs more fitness evaluations to get progress than it takes to run one iteration of the \ollga. We show that the OAS algorithm that embeds this mechanism is also capable of solving \onemax in $O(n\log\log n)$ time. To address the vague borderline between OAS and HHs, we also propose an HH that in each iteration decides whether it should perform an iteration of the \oea or of the \ollga. This can be considered as a choice between two operators that create offspring, but more complicated than in previous studies of HHs. We propose a strategy of how the HH can make this choice that is based on balancing the computation budget between the \oea and the \ollga, and we show that this HH can also achieve $O(n\log\log n)$ runtime. Apart from the result for the artificial GapPath function from~\cite{LehreO13,LissovoiOW17}, this is the first example when an HH that chooses operators to create an offspring has an asymptotical speed-up compared to the low-level heuristics it uses. 

\textbf{Synopsis.} The rest of the paper is organized as follows. In Section~\ref{sec:prelims}, we formally define the problem and the algorithms we analyze. In Section~\ref{sec:fixed-target}, we provide fixed-target results for the \olea that help us in our analysis. Then we perform runtime analysis of an OAS that has access to the distance to the optimum in Section~\ref{sec:optimal-switch}, and propose a switching mechanism that is independent of this distance in Section~\ref{sec:switching-moment}, where we also show that this mechanism has asymptotically the same performance. We translate our results into hyper-heuristics domain in Section~\ref{sec:hh}. We conclude the paper with a short discussion.

\section{Preliminaries}
\label{sec:prelims}

\subsection{Online Algorithm Selection}

Formally, online algorithm selection can be defined as a technique of running several algorithms sequentially, where each algorithm is chosen based on the information obtained by the previous algorithms, and this information is also passed to the algorithm. In the context of evolutionary optimization, this information is usually passed through the population of solutions, so that each next algorithm is initialized with the final population of the previous one. In this paper, we consider EAs with a trivial population, that is, population that consists of a single individual. In this setting, a switch to another algorithm can be done after any iteration, and the algorithm selector can be described by the pseudocode in Algorithm~\ref{alg:selector}.

\begin{algorithm}
    $x' \gets$ random solution from the search space\;
    \While{not terminated}{
        Select algorithm $A$ from portfolio $\mathcal{A}$\;
        Initialize $A$ with $x'$\;\label{line:init}
        Run $A$ optimizing function $f$ until some stopping criterion\;
        $x' \gets$ best solution found by $A$\;
    }
    \caption{An OAS algorithm optimizing function $f$.}
    \label{alg:selector}
\end{algorithm}

We note that in this pseudocode we do not specify multiple details. First, we do not specify the stopping criterion for the algorithm selector, which is typical for most runtime analysis results, but we assume that this stopping criterion is not satisfied before the algorithm finds the optimum. Second, we do not specify neither how we select algorithms, nor how we stop them, which will be discussed in the later sections.

\subsection{Available Algorithms}

In this subsection, we describe the algorithms which we are allowed to choose from in the algorithm selector. All these algorithms are designed for pseudo-Boolean optimization, that is, the search space is $\{0, 1\}^n$---the set of bit strings of length $n$ ($n$ is the \emph{problem size}).

The first algorithm is the \olea, where $\lambda$ is an integer parameter. The algorithm stores one individual $x$ (which is initialized with $x'$ in the algorithm selector, see line~\ref{line:init} in Algorithm~\ref{alg:selector}). In each iteration it independently generates $\lambda$ individuals (where $\lambda$ is a fixed parameter of the algorithm), each by creating a copy of $x$ and independently flipping each bit in it with probability $1/n$ (this variation operator is called \emph{standard bit mutation}). The \olea then selects an offspring $y$ with the best \emph{fitness} (the value of the optimized function) among the $\lambda$ offspring, breaking ties in an arbitrary way (for example, uniformly at random). If the selected offspring $y$ is not worse than its parent $x$, it replaces the parent. One iteration of this algorithm costs $\lambda$ fitness evaluations. A pseudocode of this algorithm for \emph{maximization} is shown in Algorithm~\ref{alg:olea}.

\begin{algorithm}
    \textbf{input:} individual $x$\;
    \While{not terminated}{
        \For{$i \in [1..\lambda]$}{
            $y_i \gets$ copy of $x$\;
            Independently flip each bit in $y_i$ with prob. $1/n$\;
        }
        $y \gets \argmax_{z \in \{y_1, \dots, y_\lambda\}}f(z)$\;
        \If{$f(y) \ge f(x)$}{
            $x \gets y$\;
        }
    }
    \textbf{return} $x$\;
    \caption{The \olea maximizing a pseudo-Boolean function $f$.}
    \label{alg:olea}
\end{algorithm}

The second algorithm is the \ollga with a fixed parameter $\lambda$ proposed in~\cite{DoerrDE15}. It also stores only one individual $x$ that is initialized with $x'$ in line~\ref{line:init} of Algorithm~\ref{alg:selector}. Its iterations are performed in two steps. In the first step, the \ollga chooses a number $\ell$ following the binomial distribution $\Bin(n, \lambda/n)$. It then independently creates $\lambda$ offspring of $x$, each by flipping exactly $\ell$ bits that are chosen uniformly at random (note that this means that all offspring have the same distance $\ell$ from their parent $x$). The best of these $\lambda$ offspring is chosen as the mutation winner $x'$ (ties are broken uniformly at random). In the second step, the \ollga independently creates another $\lambda$ offspring, each by applying a biased crossover to $x$ and $x'$. This biased crossover takes each bit value from $x'$ with probability $1/\lambda$ and from $x$ with probability $1 - 1/\lambda$ (independently for each of $n$ positions). The best of these $\lambda$ offspring is considered to be the winner $y$, and if $y$ is not worse than $x$, the algorithm replaces $x$ with $y$. The pseudocode of the \ollga is shown in Algorithm~\ref{alg:ollga}. The cost of one iteration is $2\lambda$ fitness evaluations. Note that parameter $\lambda$ can only take values from $[1..n]$. For $\lambda = 1$ the iteration is identical to an iteration of the \oea, but it is twice as costly.

\begin{algorithm}
    \textbf{input:} individual $x$\;
    \While{not terminated}{
        \tcp{Mutation phase}
        Choose $\ell \sim \Bin(n, \frac{\lambda}{n})$.
        \For{$i \in [1..\lambda]$}{
            $x_i \gets$ copy of $x$\;
            Flip $\ell$ bits in $x_i$, choosing positions u.a.r.\;
        }
        $x' \gets \argmax_{z \in \{x_1, \dots, x_\lambda\}}f(z)$\;
        \tcp{crossover phase}
        \For{$i \in [1..\lambda]$}{
            $y_i \gets$ copy of $x$\;
            For each bit in $y_i$ replace it with a corresponding bit from $x'$ with probability $\frac{1}{\lambda}$\;
        }
        $y \gets \argmax_{z \in \{y_1, \dots, y_\lambda\}}f(z)$\;
        \If{$f(y) \ge f(x)$}{
            $x \gets y$\;
        }
    }
    \textbf{return} $x$\;
    \caption{The \ollga maximizing a pseudo-Boolean function $f$~\cite{DoerrDE15}.}
    \label{alg:ollga}
\end{algorithm}

Although it is well known that the \ollga with a dynamic choice of $\lambda$ (via the one-fifth success rule~\cite{DoerrD18} or via a heavy-tailed distribution~\cite{AntipovBD22}) can have linear runtime on \onemax, we avoid using this algorithm in the portfolio. We aim to theoretically analyze a situation that can be met in practice, namely when the algorithm user has a limited portfolio of algorithms for their problem. Using algorithms with self-adaptive or random parameters essentially emulates having a large portfolio consisting of $n$ different algorithms. From the perspective of OAS, this would be a very large portfolio, and learning the efficiency of each algorithm might take a lot of time and be inefficient compared to even sub-optimal algorithms from the portfolio. For this reason, we restrict OAS to portfolios with the \ollga with static parameters. 

\subsection{Target Function}

In this paper, we consider the optimization of the classic theoretical benchmark called \onemax. This function is defined on bit strings of length $n$ and it returns the number of one-bits in its argument. More formally,
\begin{align*}
    \onemax(x) = \sum_{i = 1}^n x_i.
\end{align*}
There also exists a so-called generalized \onemax, which has some hidden bit string $z$ as a parameter and $\onemax(x)$ returns the number of bits which coincide in $x$ and $z$. In this paper, we consider only algorithms using unbiased variation operators (the operators that decide whether to flip a bit independently of the bit's position and value, see~\cite{LehreW12}), which run identically on generalized \onemax with any hidden bit string $z$. For this reason, to simplify the proofs, we consider the classic \onemax, where the hidden bit string is the all-ones bit string.

\subsection{Selection Mechanism}

In this paper, we consider a selection mechanism which starts with running the \olea and then can decide to switch to the \ollga. To simplify the analysis, we assume that it cannot switch back (similar to the study in~\cite{KostovskaJVNWED22}). This choice of the order of algorithms is natural, since it is known that mutation-based algorithms can be effective at hill-climbing, but closer to the optimum they slow down significantly due to the coupon-collector effect: it takes $\Theta(n\log n)$ evaluations to find the optimum when starting in distance $n^\varepsilon$ from it, where $\varepsilon > 0$ is any small constant. At the same time, in~\cite{AntipovBD25telo} it was shown the \ollga is much more effective in the late stages of optimization, when the algorithm is converging to the global optimum. In particular, they proved the following theorem.

\begin{theorem}[Theorem 3.1 in~\cite{AntipovBD25telo}]\label{thm:ollga-end}
    The expected runtime of the \ollga with static parameter~$\lambda$ (and mutation rate $p = \lambda/n$ and crossover bias $c = 1/\lambda$ as recommended in~\cite{DoerrDE15}) on \onemax with initialization at distance $D > 0$ from the optimum is $E[T_F] = O(\frac{n}{\lambda} \log D + D\lambda)$
    fitness evaluations. This is minimized by $\lambda = \sqrt{\frac{n\ln D}{D}}$, which gives a runtime guarantee of $E[T_F] = O\left(\sqrt{nD\ln D}\right)$.
\end{theorem}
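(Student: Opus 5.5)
We would prove the bound $E[T_F] = O(\frac{n}{\lambda}\log D + D\lambda)$ by a fitness-level argument. For each current distance $d \le D$ from the optimum, we lower-bound the probability $p_d$ that one iteration of the \ollga strictly decreases the distance. The expected number of iterations spent at distance $d$ is then at most $1/p_d$. Summing over $d = 1,\dots,D$ and multiplying by the cost $2\lambda$ per iteration gives the runtime. Elitism ensures the distance never increases, so each level is visited at most once.

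\textbf{Success probability at distance $d$.} One iteration improves the fitness if three events occur.
\begin{itemize}
\item \emph{Positive mutation winner.} With probability $\Omega(1)$ we have $\ell \ge 1$, and in fact $\ell = \Theta(\lambda)$ by concentration of $\Bin(n,\lambda/n)$; for $\lambda = O(1)$ it suffices that $\ell \ge 1$. The $\lambda$ mutants each flip $\ell$ random bits. One mutant flips at least one of the $d$ zero-bits with probability $1-(1-\Theta(\ell d/n))$, so the probability that some mutant does is at least $\min\{1, \Omega(\lambda \ell d/n)\} = \Omega(\min\{1, \lambda^2 d/n\})$. The mutation winner $x'$ is the mutant with the fewest destroyed one-bits, so if any mutant hits a zero-bit, $x'$ contains at least one good flipped bit.
\item \emph{Crossover isolates a good bit.} Given that $x'$ contains a good flipped bit, one crossover offspring takes exactly that good bit and none of the other $\ell-1$ flipped bits with probability $\frac1\lambda(1-\frac1\lambda)^{\ell-1} = \Omega(1/\lambda)$, using $\ell = O(\lambda)$.
\item \emph{Some offspring succeeds.} Among the $\lambda$ crossover offspring, at least one succeeds with probability $1-(1-\Omega(1/\lambda))^\lambda = \Omega(1)$.
\end{itemize}
Hence $p_d = \Omega(\min\{1, \lambda^2 d/n\})$.

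\textbf{Summation.} The expected cost is
\[
\sum_{d=1}^{D} 2\lambda \cdot O\!\left(1 + \frac{n}{\lambda^2 d}\right) = O\!\left(D\lambda + \frac{n}{\lambda}\ln D\right),
\]
where the second term uses the harmonic sum $\sum_{d \le D} 1/d = O(\log D)$. For the optimisation claim, we balance the two terms: $D\lambda = \frac{n}{\lambda}\ln D$ gives $\lambda = \sqrt{n\ln D/D}$, and substituting yields $O(\sqrt{nD\ln D})$. For $D = 1$ the factor $\ln D$ must be read as $\max\{1,\ln D\}$; we would interpret it that way.

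\textbf{Main obstacle.} The delicate step is making the bound for the first event rigorous uniformly over $\ell$. The value $\ell$ is random and may be atypically large, which hurts the crossover probability $(1-1/\lambda)^{\ell-1}$. It may also be $0$ when $\lambda$ is small.

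We would handle this by conditioning on the event $\ell \in [1, 2\lambda+1]$. This event has constant probability by Markov's inequality together with $\Pr[\ell \ge 1] = 1-(1-\lambda/n)^n \ge 1 - 1/e$. On this event $\ell \ge 1$ holds, so each mutant hits a zero-bit with probability $\Omega(d/n)$. For large $\ell$ this rises to $\Omega(\ell d/n)$, and when $\ell$ is comparable to $\lambda$ the product over $\lambda$ mutants gives $\Omega(\min\{1,\lambda^2 d/n\})$.

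If only $\ell \ge 1$ is guaranteed and $\lambda$ is large, we lose a factor of $\lambda$. To avoid this, we would additionally use that $\Pr[\ell \ge \lambda/2] = \Omega(1)$, which holds by a Chernoff bound when $\lambda$ is large, or directly when $\lambda$ is constant. Combining both conditions keeps $\ell = \Theta(\lambda)$ with constant probability and yields the claimed $p_d$.
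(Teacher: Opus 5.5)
Your proof is correct and follows the standard argument. The paper imports this result from \cite{AntipovBD25telo} without proof; your argument is a fitness-level sum over $d\in[1..D]$ with per-iteration success probability $\Omega(\min\{1,\lambda^2 d/n\})$, which the paper itself (in the hyper-heuristic proof) takes from Lemma~7 of \cite{DoerrDE15} rather than re-deriving it, and your reading of $\ln D$ as $\max\{1,\ln D\}$ for $D=1$ matches the $\log D^+$ used in Theorem~\ref{thm:known-switch-time}. The one slip is $\lambda=1$: the crossover bias is then $1$, so $(1-1/\lambda)^{\ell-1}=0$ for every $\ell\ge 2$, and there you must condition on $\ell=1$ (probability at least $1/e$) instead of $\ell\in[1,2\lambda+1]$, after which the argument goes through.
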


The main goal of our analysis is to find the optimal time (or, more precisely, optimal distance from the optimum) to switch from the \olea to the \ollga and to estimate the runtime of the algorithm when it makes a switch in that distance. We do that in Section~\ref{sec:optimal-switch}. We also aim at finding a switching mechanism which would be able to detect that the algorithm is close to that optimal distance. We propose such a mechanism and show that it yields an asymptotically optimal runtime on \onemax in Section~\ref{sec:switching-moment}.

\section{Fixed-target Results}
\label{sec:fixed-target}

In this section, we show fixed-target results for the \olea on \onemax. Fixed-target analysis (see~\cite{BuzdalovDDV22}) aims at estimating the time that it takes an algorithm to find a solution of at least some fixed quality. This kind of analysis usually gives more details about the optimization process than the standard runtime analysis, since it reflects how hard different stages of optimization can be. However, as it was shown in~\cite{BuzdalovDDV22} and~\cite{VinokurovB22}, it is not trivial to find tight bounds for fixed-target runtime even for the simple \oea, mostly because of the algorithm being able to overshoot the target fitness value. The problem of overshooting is even sharper in the population-based EAs such as the \olea, for this reason we are not aware of any existing results for them.

For our purposes, however, it is enough to get asymptotical upper bounds on the fixed-target runtime of the \olea, and for this reason we state and prove the following theorem. We note that proving a similar result with a precise leading constant (which implies a matching lower bound) might be a valuable result itself, but it requires much more work to achieve it. Thus, we leave it for future research.

\begin{theorem}
    \label{thm:olea-fixed-target}
    The expected number of fitness evaluations that it takes the \olea optimizing \onemax to find a solution in distance at most $D$ from the optimum is at most
    \begin{align*}
        E[T_F] = \begin{cases}
            O\left(\frac{n\lambda\log^+\log^+\lambda}{\log^+\lambda}\right), &\text{ if } D \ge \frac{n}{\lambda}, \\
            O\left(\frac{n\lambda\log^+\log^+\lambda}{\log^+\lambda} + n\log\left(\frac{n}{\lambda (D + 1)}\right)^+\right), &\text{ if } D < \frac{n}{\lambda}, \\
        \end{cases}
    \end{align*} 
    where $(\cdot)^+$ denotes $\max\{ \cdot, 1\}.$
\end{theorem}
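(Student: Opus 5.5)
We bound the number of iterations spent at each fitness level and then multiply by $\lambda$. Since \onemax with elitist selection never decreases the distance, we partition the distances $d$ from the optimum into phases. In each phase we lower-bound the probability $p_d$ that an iteration improves the fitness, or the expected progress per iteration, and then sum the resulting waiting times. We assume $\lambda \le n$; for larger $\lambda$ the bound $O(n\lambda\log^+\log^+\lambda/\log^+\lambda)$ dominates $\lambda n\log n$ up to constants anyway, which is the trivial cost of the \oea-type analysis at $\lambda$ evaluations per iteration. We also assume $\lambda$ exceeds a large constant; otherwise the claim reduces to the classic fitness-level bound $O(\lambda n\log(n/(D+1)))$, which is $O(n\lambda + n\log(n/(\lambda(D+1)))^+)$.

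\textbf{Phase 1: $d \ge n/\ln\lambda$.} Here we follow the known drift analysis of the \olea on \onemax (Doerr and Künnemann). The best of $\lambda$ offspring gains $\Theta(\ln\lambda/\ln\ln\lambda)$ in expectation while $d \ge n/\ln\lambda$, roughly because flipping $k$ zero-bits among $d$ has probability about $(d/n)^k/k!$, and $\lambda$ trials reach $k \approx \ln\lambda/\ln\ln\lambda$. By additive drift this phase takes $O(n\ln\ln\lambda/\ln\lambda)$ iterations, i.e.\ $O(n\lambda\log\log\lambda/\log\lambda)$ evaluations. This step is the main obstacle: we need a rigorous lower bound on the expected progress that holds uniformly over the whole range and accounts for the offspring also flipping one-bits. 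We would handle it by considering only offspring that flip exactly $k$ zero-bits and no one-bits, which happens with probability at least $\binom{d}{k}n^{-k}(1-1/n)^{n} \ge (d/(ek n))^k/e$. We would then choose $k$ so that $\lambda$ times this probability is $\Omega(1)$, and apply the additive drift theorem.

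\textbf{Phase 2: $n/\lambda \le d < n/\ln\lambda$.} An iteration improves with probability $1-(1-d/(en))^\lambda \ge 1-e^{-\lambda d/(en)} = \Omega(1)$, since $\lambda d/n \ge 1$. To avoid paying one iteration per fitness level, which could be $n/\ln\lambda$ iterations, we instead use multiplicative progress: with constant probability some offspring flips about $k$ zero-bits, where $k$ solves $\lambda (d/n)^k/k! \approx 1$. More simply, we could group the distances into dyadic intervals $[2^{-j-1}n/\ln\lambda,\,2^{-j}n/\ln\lambda]$. Within interval $j$ the expected gain per iteration is $\Omega(\lambda d/n)$ when $\lambda d/n \le 1$, and $\Omega(\log(\lambda d/n)/\log\log(\lambda d/n))$ otherwise. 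By variable drift the total is $\sum_d 1/h(d)$ with $h(d)=\Omega(\log^+(\lambda d/n)/\log^+\log^+(\lambda d/n))$. Summed over $d$ up to $n/\ln\lambda$, this gives $O(n/\ln\lambda)$ iterations, hence $O(n\lambda/\log\lambda)$ evaluations, which is within the claimed bound. If $D \ge n/\lambda$ we stop here, which proves the first case.

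\textbf{Phase 3: $D < d < n/\lambda$.} Now $\lambda d/n < 1$, so $p_d \ge 1-e^{-\lambda d/(en)} \ge \lambda d/(2en)$. The fitness-level method gives $\sum_{d=D+1}^{n/\lambda} 2en/(\lambda d) = O\bigl((n/\lambda)\log(n/(\lambda(D+1)))^+\bigr)$ iterations. Multiplying by $\lambda$ evaluations yields the additional term $O(n\log(n/(\lambda(D+1)))^+)$ in the second case. Adding the three phases and noting that overshooting a target only helps, since we bound the time to reach distance at most $D$, completes the proof.
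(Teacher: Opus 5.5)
Your decomposition is essentially the paper's. For $d\ge n/\ln\lambda$ you use the same key estimate: a single offspring flips exactly $k\approx\ln\lambda/(2\ln\ln\lambda)$ zero-bits and nothing else with probability of order $1/\lambda$. Your Phase~3 ($p_d\ge\lambda d/(2en)$ plus a harmonic sum) is exactly the paper's. The paper packages the first regime as fitness levels of width $\delta=\lceil\ln\lambda/(2\ln\ln\lambda)\rceil$, each left with probability $\Omega(1)$. That is equivalent to your additive-drift version, but it avoids patching the drift near the phase boundary, where a jump of $k$ overshoots and the truncated progress is less than $k$. Two steps need correcting, however.

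\textbf{The reduction to $\lambda\le n$.} Your justification is false: $n\lambda\log^+\log^+\lambda/\log^+\lambda$ does not dominate $\lambda n\log n$. It is smaller by a factor of order $\log n\cdot\log\lambda/\log\log\lambda$. So, as written, the case $\lambda>n$ is not covered. The reduction is also unnecessary. For $\lambda\ge n$, every $d\ge 1$ satisfies $\lambda d/n\ge 1$, so Phase~3 is empty and Phases~1--2 already reach the optimum within the first bound. This is exactly what the paper observes.

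\textbf{The Phase~2 detour.} It rests on a misjudgment. Paying one iteration per distance in $[n/\lambda,\,n/\ln\lambda)$ costs $O(n/\ln\lambda)$ iterations, i.e.\ $O(n\lambda/\log\lambda)$ evaluations, which is already within $O(n\lambda\log\log\lambda/\log\lambda)$. The constant success probability $1-e^{-\lambda d/(en)}=\Omega(1)$ you derived therefore finishes this phase by the plain fitness-level method with single-value levels, which is what the paper does. Your variable-drift computation ends at exactly this figure anyway, since $h(d)=\Omega(1)$ alone gives it. The sharper claim $h(d)=\Omega(\log(\lambda d/n)/\log\log(\lambda d/n))$ is neither proved nor needed. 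The sub-case $\lambda d/n\le 1$ you mention cannot occur in this range.
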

\begin{proof}
    We note that for $\lambda \le e^e$ these bounds are simplified to $O(n(1 + \log\frac{n}{D^+}))$, and they trivially follow from the bounds for the \oea from~\cite[Theorem~9]{BuzdalovDDV22} by noting that the progress in one iteration of the \olea dominates the progress of the \oea, while each iteration of the \olea with these values of $\lambda$ costs $\Theta(1)$ fitness evaluations. Hence, in the rest of the proof we assume that $\lambda > e^e$. 

    We use the classic fitness levels technique to prove this result~\cite{Wegener01,Wegener02}. We partition the set $[0..n]$ of all fitness values into subsets (levels)\footnote{In the classic fitness levels technique, it is the search space that is partitioned, not the fitness space. However, to ease the reading we simplify the notation and assume that any level in the \emph{classic} notation corresponds to all search points that have fitness values in the fitness level in \emph{our} notation.}, and for each level $i$ we estimate $p_i$---the probability that the \olea leaves level $i$ to go to a level with strictly better fitness in one iteration. Then the expected number of iterations the \olea spends in level $i$ is at most $\frac{1}{p_i}$, and the total runtime is at most the sum of these expected runtimes over all levels. The level partitions that we use below are inspired by the previous analyses of the \olea~\cite{DoerrK15,GiessenW17}, which distinguished similar phases depending on the current distance to the optimum. We only adapt their arguments to the fixed-target setting. We consider 3 cases based on the distance $D$. 

    \textbf{Large target distance:} $D \ge \frac{n}{\ln \lambda}$. We define levels as follows. $A_0$ denotes the optimal level (that is, we aim at estimating the runtime until this level is reached), hence $A_0 = [n - D..n]$. Then we define $\delta \coloneqq \lceil\frac{\ln\lambda}{2\ln\ln\lambda}\rceil$ (note that $\delta \ge 2$, since $\lambda \ge e^e$), and with this $\delta$ we define levels $A_i$ for $i \in [1..\lceil \frac{n - D}{\delta} \rceil]$ as 
    \begin{align*}
        A_i = [\max\{0, n - D - i\delta\}..n - D - (i - 1)\delta - 1].
    \end{align*}
    We say that the \olea is in level $i$, if the current individual $x$ has fitness $f(x) \in A_i$. Note that the fitness level can only decrease with time due to elitism of the \olea. We now estimate the probability $p_i$ that the algorithm leaves level $A_i$ in one iteration for an arbitrary $i$. We note that for this it is sufficient to have at least one of $\lambda$ offspring to be better than $x$ by at least $\delta$. Let $q_i$ be the probability that this happens in one particular offspring. Then $q_i$ is at least the probability that mutation flips exactly $\delta$ zero-bits and no one-bits. Denoting by $d$ the number of zero-bits in $x$ (the distance from it to the optimum), we have
    \begin{align*}
        q_i \ge \binom{d}{\delta} \left(\frac{1}{n}\right)^\delta \left(1 - \frac{1}{n}\right)^{n - \delta} \ge \frac{1}{e}\binom{d}{\delta} \left(\frac{1}{n}\right)^\delta.
    \end{align*}
    By the estimates of the binomial coefficients from~\cite{Doerr20bookchapter}, namely Eq.~(1.4.15), and since $d \ge D \ge \frac{n}{\ln\lambda}$, this is
    \begin{align*}
        &\ge \frac{1}{e} \left(\frac{d}{\delta n}\right)^\delta \ge \frac{1}{e} \left(\frac{1}{\delta\ln\lambda}\right)^\delta = \exp\left(\delta \ln\left(\frac{1}{\delta\ln\lambda} \right) - 1\right) \\
        &\ge \exp\left( - \frac{\ln\lambda}{2\ln\ln\lambda} \ln \left(\frac{\ln^2\lambda}{2\ln\ln\lambda}\right) - 1\right) \ge \exp\left( -\ln\lambda - 1 \right) = \frac{1}{\lambda e}.
    \end{align*}
    The probability that it happens in at least one of the individuals is therefore
    \begin{align}
        \label{eq:pi}
        p_i = 1 - \left(1 - q_i\right)^\lambda \ge 1 - \left(1 - \frac{1}{e\lambda}\right)^\lambda \ge 1 - e^{-\frac{1}{e}} = \Omega(1).
    \end{align}
    Hence, the expected time to leave each level is $O(1)$. There are at most $\frac{n}{\delta} = \Theta(\frac{n\log\log\lambda}{\log\lambda})$ levels, thus summing these expected times over all levels yields the bound of $E[T_I] = O(\frac{n\log^+\log^+\lambda}{\log^+\lambda})$ iterations. Since each iteration costs exactly $\lambda$ fitness evaluation (one per offspring), we obtain the desired bound $E[T_F] = O(\frac{n\lambda\log^+\log^+\lambda}{\log^+\lambda})$ on the expected number of evaluations for $D \ge \frac{n}{\ln\lambda}$.
    
    \textbf{Medium target distance:} $\frac{n}{\ln\lambda} > D \ge \frac{n}{\lambda}$. We now use the following partition into levels. The target level $A_0$ stays the same, that is, $A_0 = [n - D..n]$. For $i \in [1..\lfloor\frac{n}{\ln\lambda}\rfloor - D]$ we define $A_i = \{n - D - i\}$ (these levels consist of a single fitness value). For larger $i$ we define $A_i$ as a set of fitness values from $n - \lfloor\frac{n}{\ln\lambda}\rfloor - (i - \lfloor\frac{n}{\ln\lambda}\rfloor - D) \delta$ to $n - \lfloor\frac{n}{\ln\lambda}\rfloor - (i - 1 - \lfloor\frac{n}{\ln\lambda}\rfloor - D) \delta - 1$.
    In other words, these levels consist of $\delta$ consequent fitness values (where $\delta$ is the same as in the analysis for larger $D$). Figure~\ref{fig:levels} illustrates this partition.
    \begin{figure}
        \begin{center}
          \scalebox{0.75}{
            \begin{tikzpicture}
                \draw [ultra thick,->] (0,0) -- node [below=2pt, pos=1.0] {$f(x)$} (10, 0);

                \draw (9, 0.2) -- node [below, pos=1.0] {$n$} (9, -0.2);
                \draw (7.5, 0.2) -- node [below, pos=1.0] {$n-D$} (7.5, -0.2);
                \draw (6, 0.2) -- node [below, pos=1.0] {$n-\frac{n}{\ln\lambda}$} (6, -0.2);
                \draw (0, 0.2) -- node [below, pos=1.0] {$0$} (0, -0.2);
                \foreach \i in {1,...,5}
                    \draw (\i, 0.1) -- (\i, -0.1);
                \foreach \i in {1,...,4}
                    \draw (6 + \i * 0.3, 0.1) -- (6 + \i * 0.3, -0.1);

                \draw [decorate,decoration={brace,amplitude=20pt}] (0, 0) -- node [above=20pt, pos=0.5,align=center] {Levels with $\delta$\\fitness values} (6, 0);

                \draw [decorate,decoration={brace,amplitude=20pt}] (6, 0) -- node [above=20pt, pos=0.5,align=center] {Levels with a single\\fitness value} (7.5, 0);
                
                \draw [decorate,decoration={brace,amplitude=20pt}] (7.5, 0) -- (9, 0);

                \node (a) at (10, 1.2) {Target level $A_0$};
                \draw (a) -- (8.3, 0.8);
            \end{tikzpicture}
          }
        \end{center}
        \caption{Illustration of the fitness levels partition for small distances $D < \frac{n}{\ln \lambda}$.}
        \label{fig:levels}
    \end{figure}
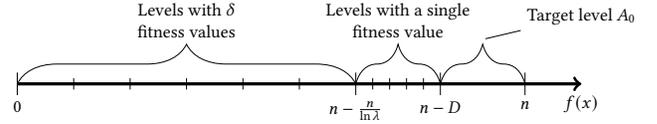

    We have already showed that the probability to leave the levels $A_i$ with $i > \lfloor\frac{n}{\ln\lambda}\rfloor - D$ (those that have $\delta$ different fitness values) is $\Omega(1)$. Now we show the same bound for the rest of the levels. The probability $q_i$ to create a better individual with mutation is at least the probability to flip one zero-bit, and no other bits. If the current individual $x$ is in distance $d$ from the optimum, it has $d$ zeros. Since we now consider case when $d > D \ge \frac{n}{\lambda}$, we have
    \begin{align*}
        q_i \ge \frac{d}{n}\left(1 - \frac{1}{n}\right)^{n-1} \ge \frac{1}{\lambda e},
    \end{align*} 
    hence by Eq.~\eqref{eq:pi} we have $p_i = \Omega(1)$, and the expected time until we leave the level is at most $O(1)$. The number of levels that we need to leave is at most $\lceil \frac{n}{\delta} \rceil + \lceil\frac{n}{\ln\lambda}\rceil - D = O(\frac{n}{\delta})$. Summation of the expected leaving times over all theses levels and multiplying by $\lambda$ (the cost of one iteration) results into bound $E[T_F] = O(\frac{n\lambda\log^+\log^+\lambda}{\log^+\lambda})$, which completes the proof of the lemma for $D \ge \frac{n}{\lambda}$.

    \textbf{Small target distance:} $D < \frac{n}{\lambda}$. We use the same level partition as in the previous case, but the main difference now is that in some levels $p_i$ is no longer $\Omega(1)$. The bounds from previous cases hold only for levels with fitness values at most $n - \frac{n}{\lambda}$, hence if $\lambda \ge n$, then the bound $E[T_F] = O(\frac{n\lambda\log^+\log^+\lambda}{\log^+\lambda})$ holds. Consider a fitness level $A_i$ with $i < \frac{n}{\lambda} - D$ (that is, for which the previous bound on $p_i$ does not hold). Let $d_i \coloneqq D + i$ denote the distance to the optimum of $x$ when the algorithm is in level $A_i$. Then we have
    \begin{align*}
        q_i \ge \frac{d_i}{n}\left(1 - \frac{1}{n}\right)^{n-1} \ge \frac{d_i}{en}.
    \end{align*}
    Using Lemma 2 from~\cite{AntipovBD22}, we obtain
    \begin{align*}
        p_i = 1 - (1 - q_i)^\lambda \ge \frac{\lambda q_i}{1 + \lambda q_i} \ge \frac{1}{2}\min\{1, \lambda q_i\} = \frac{\lambda d_i}{2en},
    \end{align*}
    hence the expected time to leave level $i$ is at most $\frac{2en}{d_i\lambda}$. Summing these runtimes over all sub-optimal levels $i$ with $i < \frac{n}{\lambda} - D$ (or, in other words, over all $d_i \in [D + 1..\lfloor\frac{n}{\lambda}\rfloor]$), we obtain
    \begin{align*}
        \sum_{d_i = D + 1}^{\lfloor\frac{n}{\lambda}\rfloor} \frac{2en}{\lambda d_i} &= \frac{2en}{\lambda} \sum_{d_i = D + 1}^{\lfloor\frac{n}{\lambda}\rfloor} \frac{1}{d_i} \le \frac{2en}{\lambda} \left(\ln\left(\frac{n}{\lambda (D + 1)}\right) + 1\right) \\
        &= O\left(\frac{n}{\lambda}\left(\log\left(\frac{n}{\lambda (D + 1)}\right) + 1 \right)\right),
    \end{align*}
    where in the penultimate step we used an estimate of a part of the harmonic series from~\cite[Lemma~2.4]{AntipovBD25telo}. Taking into account that this term is only added when $\frac{n}{\lambda} > 1$, multiplying this bound by $\lambda$ and adding the time needed to leave all levels with $i \ge \frac{n}{\lambda} - D$ (computed before), we obtain for all $D < \frac{n}{\lambda}$
    \begin{align*}
        E[T_F] &= O\left(\frac{n\lambda\log^+\log^+\lambda}{\log^+\lambda} + n\left(\log\left(\frac{n}{\lambda (D + 1)}\right)^+ + 1\right)\right) \\
        &= O\left(\frac{n\lambda\log\log\lambda}{\log\lambda} + n\log\left(\frac{n}{\lambda (D + 1)}\right)^+\right).\qedhere
    \end{align*}
\end{proof}

\section{Optimal Switch Time}
\label{sec:optimal-switch}

In this section, we bound the expected runtime of OAS which starts solving \onemax with running the \olea with population size $\lambda_1$ and then as soon as it finds an individual in distance $D$ from the optimum it switches to the \ollga with population size $\lambda_2$. Obviously, this is not a realistic assumption that an algorithm would know the distance to the optimum on any real-world problem, but these estimates will help us understand what an optimal switching mechanism would look like. The main result of this section is the following theorem.

\begin{theorem}
    \label{thm:known-switch-time}
    Let $D \in [0..n]$. Consider the OAS algorithm that solves \onemax by starting with running the \olea with population size $\lambda_1$ until it finds an individual in distance at most $D$ from the optimal all-ones bit string, and then it switches to the \ollga with population size $\lambda_2$. Then the expected time until it finds the optimum of \onemax is at most
    \begin{align*}
        E[T_F] &= O\left(\frac{n\lambda_1\log^+\log^+\lambda_1}{\log^+\lambda_1} + n\log\left(\frac{n}{\lambda_1 (D + 1)}\right)^+\right. \\
        &+ \left.\frac{n}{\lambda_2} \log D^+ + D\lambda_2 \right),
    \end{align*}
    where $(\cdot)^+$ denotes $\max\{\cdot, 1\}$.
\end{theorem}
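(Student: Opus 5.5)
The runtime naturally splits into two phases, and I will bound each by an earlier result, using linearity of expectation. Let $T^{(1)}$ be the number of fitness evaluations the \olea with population size $\lambda_1$ spends until it first holds an individual at distance at most $D$ from the optimum. Let $T^{(2)}$ be the number of evaluations the \ollga with population size $\lambda_2$ then needs to reach the optimum. Then $T_F = T^{(1)} + T^{(2)}$, so $E[T_F] = E[T^{(1)}] + E[T^{(2)}]$, and it suffices to bound the two expectations separately.

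\textbf{First phase.} Theorem~\ref{thm:olea-fixed-target} bounds $E[T^{(1)}]$ directly. The starting point is uniformly random, but the fitness-level argument in that proof holds for every starting point, so the bound holds unconditionally. Both cases of that theorem are dominated by the expression
\begin{align*}
O\left(\frac{n\lambda_1\log^+\log^+\lambda_1}{\log^+\lambda_1} + n\log\left(\frac{n}{\lambda_1(D+1)}\right)^+\right).
\end{align*}
For $D \ge n/\lambda_1$ the second term is simply an extra nonnegative summand, since $\log(\cdot)^+ \ge 0$. This covers the first two terms of the claim.

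\textbf{Second phase.} If the first phase ends at the optimum, then $T^{(2)}=0$; this also covers $D=0$. Otherwise the \ollga starts from some random distance $D' \in [1..D]$. I condition on $D'$, which determines the \ollga's starting state up to symmetry. Theorem~\ref{thm:ollga-end} then gives
\begin{align*}
E[T^{(2)} \mid D'] = O\left(\frac{n}{\lambda_2}\log D' + D'\lambda_2\right).
\end{align*}
This bound is monotone in $D'$, so it is at most $O(\frac{n}{\lambda_2}\log D^+ + D\lambda_2)$. Here $\log D^+$ is used so the expression stays meaningful when $D=1$.

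The bound of Theorem~\ref{thm:ollga-end} must hold uniformly over the start distance with a constant independent of $D'$. This is the only point requiring care, because the first phase can overshoot below $D$. I expect it to follow from the form of the cited result, since the $O$-constant there does not depend on $D$. By the strong Markov property and the unbiasedness of the \ollga, its behaviour depends only on the starting distance, not on the particular individual. Taking expectation over $D'$ and summing the two phases gives the claimed bound.
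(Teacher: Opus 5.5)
Your proposal is correct and takes the same approach as the paper, which says only that the theorem follows from Theorem~\ref{thm:olea-fixed-target} for the \olea phase and Theorem~\ref{thm:ollga-end} for the \ollga phase. Your explicit handling of overshooting below $D$ (conditioning on the switching distance $D' \le D$, using that the behaviour of the \ollga depends only on $D'$, and using monotonicity of $\frac{n}{\lambda_2}\log D' + D'\lambda_2$ in $D'$) is exactly the detail the paper leaves implicit; the case $D'=1$, where $\log D' = 0$ although the true cost is $\Omega(n/\lambda_2)$, does no harm because the first term of the bound is always at least $n$.
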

This theorem trivially follows from Theorems~\ref{thm:ollga-end} and~\ref{thm:olea-fixed-target}. Since the expression given by this theorem is hard to interpret, in the rest of the section we give more explanations of how this bound looks in some particular cases.

First, consider a trivial case, when $\lambda_1 = 1$, that is, we start by running the \oea until we reach distance $D$, and then we switch to the \ollga. The following corollary describes this case.

\begin{corollary}
    \label{cor:oea-ollga}
    Consider the OAS algorithm optimizing \onemax by starting with running the \oea and then switching to the \ollga with population size $\lambda$ as soon as it finds an individual in distance at most $D$ from the optimum. Then the expected number of fitness evaluations until it finds the optimum of \onemax is at most
    \begin{align}
        \label{eq:etf-oea-ollga}
        E[T_F] = O\left(n\log\frac{n}{D^+} + \frac{n}{\lambda} \log D^+ + D \lambda\right).
    \end{align} 
    This bound is asymptotically minimized by choosing $D = \frac{n}{poly(\log n)} \cap O(\frac{n(\log\log n)^2}{\log n})$ and $\lambda = \Omega(\frac{\log n}{\log\log n}) \cap O(\frac{n}{D}\log\log n)$, which results in the expected runtime being $E[T_F] = O(n\log\log n)$. 
\end{corollary}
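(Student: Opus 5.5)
The plan is to specialize Theorem~\ref{thm:known-switch-time} to $\lambda_1 = 1$ and $\lambda_2 = \lambda$, and then to optimize the three resulting terms.

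\textbf{Specialization.} With $\lambda_1 = 1$ we have $\log^+\lambda_1 = 1$, so the first term $\frac{n\lambda_1\log^+\log^+\lambda_1}{\log^+\lambda_1}$ becomes $O(n)$. The second term becomes $n\log(\frac{n}{D+1})^+$. Since $D+1 \ge D^+$, it is at most $n(1+\log\frac{n}{D^+})$. The additive $O(n)$ is absorbed: either $D \le n/e$, so that $\log\frac{n}{D^+} \ge 1$, or it is dominated by the $O(n\log\log n)$ target anyway. The remaining two terms, $\frac{n}{\lambda}\log D^+ + D\lambda$, are copied directly from Theorem~\ref{thm:known-switch-time}. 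This gives~\eqref{eq:etf-oea-ollga}.

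\textbf{Optimization.} I would bound each of the three terms by $O(n\log\log n)$ under the stated ranges.
\begin{itemize}
\item For $n\log\frac{n}{D^+}$: if $D \ge n/\mathrm{poly}(\log n)$, i.e.\ $D \ge n/\log^c n$ for a constant $c$, then this term is at most $cn\log\log n$.
\item For $\frac{n}{\lambda}\log D^+$: we have $\log D^+ \le \log n$, and $\lambda = \Omega(\frac{\log n}{\log\log n})$ gives $\frac{n\log n}{\lambda} = O(n\log\log n)$.
\item For $D\lambda$: the condition $\lambda = O(\frac{n}{D}\log\log n)$ gives $D\lambda = O(n\log\log n)$.
\end{itemize}

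\textbf{Consistency of the ranges for $\lambda$.} The lower and upper bounds on $\lambda$ must be compatible. This requires $\frac{\log n}{\log\log n} = O(\frac{n}{D}\log\log n)$, i.e.\ $D = O(\frac{n(\log\log n)^2}{\log n})$, which is exactly the stated upper bound on $D$. This interval intersects $\frac{n}{\mathrm{poly}(\log n)}$, for example at $D = \frac{n(\log\log n)^2}{\log n}$ with $\lambda = \frac{\log n}{\log\log n}$, so admissible choices exist. Since $D \ge 1$ in this regime, $D^+ = D$ and no degenerate case arises.

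\textbf{Main obstacle.} The main subtlety is the claim that $O(n\log\log n)$ is the \emph{minimum} of~\eqref{eq:etf-oea-ollga}. I would argue this by balancing. The first term forces $D \ge n/\mathrm{poly}(\log n)$ for it to be $O(n\log\log n)$, and then $\log D = \Theta(\log n)$. The AM--GM inequality then gives
\[
\frac{n}{\lambda}\log D + D\lambda \ge 2\sqrt{nD\log D} = \Omega\!\left(\sqrt{nD\log n}\right).
\]
Requiring the right-hand side to be $O(n\log\log n)$ forces $D = O(\frac{n(\log\log n)^2}{\log n})$. Given this $D$, the bound $\frac{n}{\lambda}\log n = O(n\log\log n)$ forces $\lambda = \Omega(\frac{\log n}{\log\log n})$, and $D\lambda = O(n\log\log n)$ forces $\lambda = O(\frac{n}{D}\log\log n)$. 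So the stated ranges are exactly those that keep the upper bound at $O(n\log\log n)$. Trying $D$ even smaller only raises the first term, and for $D < n/\mathrm{poly}(\log n)$ the first term exceeds $\omega(n\log\log n)$. This shows that, up to constants, one cannot do better with this bound.
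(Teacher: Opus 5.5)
Your proposal is correct and follows the paper's proof. Like the paper, you specialize Theorem~\ref{thm:known-switch-time} to $\lambda_1=1$, $\lambda_2=\lambda$, check the three terms under the stated ranges, and argue minimality by optimizing over $\lambda$ and then balancing in $D$. Your AM--GM bound $2\sqrt{nD\log D}$ is the paper's substitution $\lambda=\sqrt{n\ln D/D}$, and your range-compatibility check matches the remark after the paper's proof. One small repair: \eqref{eq:etf-oea-ollga} is claimed for all $D$ and $\lambda$, so for $D>n/e$ you should absorb the additive $O(n)$ via $D\lambda \ge D > n/e$, not by appealing to the $O(n\log\log n)$ target (the paper skips this point entirely).
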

\begin{proof}
    The general expression~\eqref{eq:etf-oea-ollga} for the bound on $E[T_F]$ follows directly from Theorem~\ref{thm:known-switch-time} by taking $\lambda_1 = 1$ and $\lambda_2 = \lambda$. 
    
    To show that this bound cannot be smaller than $O(n\log\log n)$, we first note that for $D = 0$ the bound is $O(n\log n)$, hence we can ignore this case and write $D$ instead of $D^+$. Then we note that by Theorem~\ref{thm:ollga-end}, the last two terms are minimized by taking $\lambda = \sqrt{\frac{n\ln D}{D}}$, which results in $O(n\log\frac{n}{D} + \sqrt{nD\log D})$ bound. Since the first term is decreasing with $D$ and the second is increasing, this expression is minimized when they are asymptotically equal. This can be achieved by taking $D = \Theta(\frac{n (\log\log n)^2}{\log n})$, when both terms are  $\Theta(n\log\log n)$. Hence, the best bound that can follow from Eq.~\eqref{eq:etf-oea-ollga} is $O(n\log\log n)$.



    We now show that $O(n\log\log n)$ bound is achieved for the stated ranges of $D$ and $\lambda$ (since $D$ is super-constant, we omit the $D^+$ notation). The lower bound on $D$ implies
    \begin{align*}
        O\left(n\log\frac{n}{D}\right) &= O(n\log\log n).
    \end{align*}
    The upper bound on $D$ implies that $\log D = O(\log n)$. Together with the lower bound on $\lambda$ this implies
    \begin{align*}
        O\left(\frac{n}{\lambda}\log D\right) &= O\left(\frac{n\log\log n}{\log n} \log \frac{n\log\log n}{\log n}\right) = O(n \log\log n).
    \end{align*}
    The upper bound on $\lambda$ implies that
    \begin{align*}
        O(D\lambda) &= O\left(D \cdot \frac{n}{D} \log\log n\right) = O(n\log\log n).
    \end{align*}
    Hence, all three terms in Eq.~\eqref{eq:etf-oea-ollga} are $O(n\log\log n)$, which completes the proof.
\end{proof}

We note that the conditions on the values of $D$ and $\lambda$ are tight, that is, if they are not satisfied, then Eq.~\eqref{eq:etf-oea-ollga} gives a bound that is asymptotically larger than $O(n\log\log n)$ (but since this is just an upper bound, the actual runtime might be $O(n\log\log n)$ for a wider range of $D$ and $\lambda$). If $\lambda$ is $\omega(\frac{n}{D}\log\log n)$, then the term $D\lambda$ is $\omega(n\log\log n)$. If $D$ is $\frac{n}{\log^{\omega(1)} n}$, then the first term $n\log\frac{n}{D}$ is $\omega(n\log\log n)$. If $\lambda$ is $o(\frac{\log n}{\log\log n})$, then $\frac{n}{\lambda}\log D = \omega(\frac{n \log\log n}{\log n} \log D) = \omega(n \log\log n)$. Finally, if $D = \omega(\frac{n(\log\log n)^2}{\log n})$, then the upper bound on $\lambda$ is asymptotically smaller than the lower bound, hence we cannot choose $\lambda$ with which $D\lambda$ is $O(n\log\log n)$.

With Corollary~\ref{cor:oea-ollga} we prove that an OAS algorithm can asymptotically outperform the algorithms from its portfolio. The best expected runtime that the \oea and the \olea can achieve on \onemax is $\Theta(n\log n)$~\cite{LehreW12} and the best expected runtime of the \ollga with static parameters is $\Theta(n\sqrt{\frac{\log n \log\log\log n}{\log\log n}})$~\cite{DoerrD18}. OAS allows to achieve a runtime that is by a super-constant factor $\Omega(\sqrt{\frac{\log n \log\log\log n}{(\log\log n)^3}})$ better than the best runtime achievable by the \ollga using static parameters.

It is interesting to note that to satisfy conditions of Corollary~\ref{cor:oea-ollga}, the \ollga must use $\lambda \approx \Theta(\log n)$, while the best runtime of the \ollga on \onemax is achieved with much smaller $\lambda$, namely $\lambda \approx \Theta(\sqrt{\log n})$ (apart from sub-logarithmic factors). This is not surprising in the light of the results of~\cite{DoerrD18} who showed that the optimal fitness-dependent $\lambda$ is $\sqrt{n/d}$, where $d$ is the distance to the optimum, hence optimal $\lambda$ grows as the algorithm approaches the optimum. It is also in line with Theorem~\ref{thm:ollga-end}, which indicates that the smaller the starting distance is, the larger the value of $\lambda$ that minimizes the runtime.

To extend Corollary~\ref{cor:oea-ollga} to the setting when we start with the \olea with a non-trivial population size $\lambda > 1$, we note that the bound in Theorem~\ref{thm:olea-fixed-target} is the smallest when $\lambda = \Theta(1)$, and it is the same as for $\lambda = 1$. This bound is asymptotically larger when $\lambda = \omega(1)$. Hence, we cannot prove a better-than-$O(n\log\log n)$ bound for OAS with a portfolio consisting of the \olea and the \ollga. However, in the following corollary we show that this bound holds for a wide range of population sizes of the \olea. 
 
\begin{corollary}
  \label{cor:olea-ollga}
    Consider the OAS algorithm optimizing \onemax by starting with running the \olea with population size $\lambda_1$ and then switching to the \ollga with population size $\lambda_2$ as soon as it finds an individual at distance at most $D$ from the optimum. Assume that
    \begin{enumerate}
        \item[(a)] $\lambda_1 = O(\frac{\log\log n \log\log\log n}{\log\log\log\log n})$,
        \item[(b)] $D = \frac{n}{poly(\log n)} \cap O(\frac{n(\log\log n)^2}{\log n})$, and
        \item[(c)] $\lambda_2 = \Omega(\frac{\log n}{\log\log n}) \cap O(\frac{n}{D}\log\log n)$.
    \end{enumerate} 
    Then the expected number of fitness evaluations needed by the algorithm to find the optimum satisfies $E[T_F] = O(n\log\log n)$.
\end{corollary}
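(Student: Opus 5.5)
The plan is to start from the general bound of Theorem~\ref{thm:known-switch-time} with the given $\lambda_1$, $D$, $\lambda_2$, and show that each of its four terms is $O(n\log\log n)$. The last three terms are exactly those already handled in Corollary~\ref{cor:oea-ollga}, up to a small difference in the second term. So the only genuinely new work is the first term, which comes from the population size $\lambda_1$ of the \olea.

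\textbf{First term.} We treat $\frac{n\lambda_1\log^+\log^+\lambda_1}{\log^+\lambda_1}$ by noting that the function $g(\lambda)=\frac{\lambda\log^+\log^+\lambda}{\log^+\lambda}$ is nondecreasing for large enough $\lambda$, and is $O(1)$ when $\lambda=O(1)$. It therefore suffices to evaluate $g$ at the upper bound $\lambda^*=\frac{\log\log n\log\log\log n}{\log\log\log\log n}$. For this $\lambda^*$ we have $\log\lambda^*=\Theta(\log\log\log n)$, since the numerator is $\log\log n$ times lower-order factors. Likewise $\log\log\lambda^*=\Theta(\log\log\log\log n)$. Hence
\[
g(\lambda^*)=\Theta\!\left(\frac{\log\log n\log\log\log n}{\log\log\log\log n}\cdot\frac{\log\log\log\log n}{\log\log\log n}\right)=\Theta(\log\log n).
\]
This gives $O(n\log\log n)$ for the first term. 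The point to check carefully is that the hidden constants inside the $\Theta$ for $\log\lambda^*$ and $\log\log\lambda^*$ stay bounded. This holds because $\log(\log\log n\cdot\text{lower order})=\log\log\log n\,(1+o(1))$, and similarly one level deeper. This is the step I expect to be the main (though mild) obstacle.

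\textbf{Second term.} We need to bound $n\log(\frac{n}{\lambda_1(D+1)})^+$. Since $\lambda_1\ge1$, this is at most $n\log(\frac{n}{D})^+$. Condition (b) gives $D\ge n/\mathrm{poly}(\log n)$, so this is $O(n\log\log n)$. This is the same computation as in Corollary~\ref{cor:oea-ollga}. Note also that $D$ is superconstant, so the $D^+$ notation can be dropped.

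\textbf{Remaining terms.} The last two terms, $\frac{n}{\lambda_2}\log D$ and $D\lambda_2$, depend only on $D$ and $\lambda_2$. Conditions (b) and (c) coincide exactly with the hypotheses of Corollary~\ref{cor:oea-ollga}, so we reuse its computation verbatim:
\begin{itemize}
\item $\log D=O(\log n)$ together with $\lambda_2=\Omega(\log n/\log\log n)$ gives $\frac{n}{\lambda_2}\log D=O(n\log\log n)$;
\item $\lambda_2=O(\frac{n}{D}\log\log n)$ gives $D\lambda_2=O(n\log\log n)$.
\end{itemize}

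Summing the four terms yields $E[T_F]=O(n\log\log n)$.
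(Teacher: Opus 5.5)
Your proposal is correct and follows the paper's sketched argument. You plug conditions (a)--(c) into Theorem~\ref{thm:known-switch-time}, reuse the computation of Corollary~\ref{cor:oea-ollga} for the terms depending on $D$ and $\lambda_2$ (using $\lambda_1\ge 1$ for the second term), and check that $\frac{n\lambda_1\log^+\log^+\lambda_1}{\log^+\lambda_1}=O(n\log\log n)$ under (a). Your monotonicity-plus-evaluation argument for that first term supplies the detail the paper omits; $g$ is in fact nondecreasing for all $\lambda\ge 1$, and replacing $\lambda^*$ by $C\lambda^*$ changes $g$ only by a constant factor.
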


We omit the proof for reasons of space, and only note that it follows from combining the proof of Corollary~\ref{cor:oea-ollga} with putting the conditions on $\lambda_1$ into Theorem~\ref{thm:known-switch-time}.

\section{Automatically Detecting a Good Switching Moment}
\label{sec:switching-moment}

In the previous section we considered a case when the algorithm can decide to switch at a particular distance from the optimum. This is probably less relevant in practice, since it is rare when the algorithm has access to this information. However, the algorithm can estimate how hard the progress is. It can start with the \olea and after it does not observe fitness improvement for some particular time, it can switch to the more complicated mechanics of the \ollga, which allows faster convergence to the optimum. The problem here is that switching too early might make us use the \ollga at distances where it is not efficient due to the big cost of each of its iterations (when the \olea gives much better progress per fitness evaluation).

In this section, we propose a mechanism to detect the necessity of switching when our portfolio consists of the \oea\footnote{In order to simplify the proofs (and to get rid of two population sizes $\lambda_1$ and $\lambda_2$), we only consider the \olea with $\lambda = 1$ in this section. However, without proof we note that the results can be easily adapted for the \olea with $\lambda$ that satisfies the condition (a) of Corollary~\ref{cor:olea-ollga}.} and the \ollga with optimal population size $\lambda = \Theta(\log n)$, as suggested by Corollary~\ref{cor:oea-ollga}. This mechanism starts with the \oea and switches to the \ollga after $k$ consecutive iterations of the \oea without progress. The choice of $k$ plays an important role, as it defines the distribution of the distance at which we switch to the \ollga. Our intuition suggests that we should switch when we understand that the \oea is struggling to achieve progress in $2\lambda$ iterations, since at this point the cost of search equals that of the one iteration of the \ollga. However, it does not mean that we should choose $k = 2\lambda$: if we are unlucky in one of the fitness levels before reaching the optimal switching distance, this would increase the expected runtime. Instead, to be sure that the algorithm should indeed switch, this threshold should be set more conservatively as $k = \Theta(\lambda \log n)$. This allows us not to switch too early by making unfortunate events of $k$ failures in a row sufficiently unlikely before we reach the optimal interval of $D$. At the same time, it allows to switch to the \ollga with minimal delay from the optimal moment. 

To demonstrate the efficiency of this switching mechanism, we show that it also achieves $O(n\log\log n)$ runtime on \onemax, which is shown in the following theorem.

\begin{theorem}
    \label{thm:switch-mech}
    Consider optimization of \onemax with the OAS algorithm that starts with the \oea and then switches to the \ollga with population size $\lambda = \Theta(\log n)$ after the \oea does not manage to create a better individual for $k = \Theta(\lambda \log n)$ iterations in a row. The expected number of fitness evaluations it takes this OAS algorithm to find the optimum is $E[T_F] = O(n\log\log n)$.
\end{theorem}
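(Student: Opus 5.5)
We split the run at the random distance $D_s$ at which the switch happens. The cost is then the cost of the \oea phase plus the cost of the \ollga phase started from distance $D_s$, and we bound each in expectation.

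We fix two thresholds: $D_{\mathrm{hi}} \coloneqq \frac{n\log\log n}{\log n}$ and $D_{\mathrm{lo}} \coloneqq \frac{n}{\log^{c} n}$ for a constant $c$ chosen below. Note that $D_{\mathrm{hi}}$ is within the range allowed by Corollary~\ref{cor:oea-ollga}. By Theorem~\ref{thm:ollga-end}, the \ollga with $\lambda = \Theta(\log n)$ started at distance $D$ costs $O(\frac{n}{\log n}\log D + D\log n)$ evaluations. This is $O(n\log\log n)$ whenever $D = O(D_{\mathrm{hi}})$. For larger $D$ it is at most $O(D\log n)$, which is at most $O(n\log n)$.

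\textbf{Step 1 (no early switch).} At distance $d$, an iteration of the \oea improves with probability at least $\frac{d}{en}$. So $k$ consecutive failures happen with probability at most $(1-\frac{d}{en})^k \le \exp(-\frac{kd}{en})$. For $d \ge D_{\mathrm{hi}}$ and $k = C\lambda\log n = \Theta(\log^2 n)$, this is $\exp(-\Omega(C\log n\log\log n)) = n^{-\omega(1)}$. The \oea visits at most $n$ fitness levels, and each level is entered once with a fresh run of failures. A union bound therefore shows that $\Pr[D_s > D_{\mathrm{hi}}] = n^{-\omega(1)}$. On this event the \ollga phase costs at most $O(n\log n)$ evaluations (and $O(n^2)$ even crudely), so its contribution to the expectation is $o(1)$. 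On the complementary event $D_s \le D_{\mathrm{hi}}$, the \ollga phase costs $O(n\log\log n)$ by the computation above.

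\textbf{Step 2 (\oea phase cost).} We bound the \oea phase by the time to reach distance $D_{\mathrm{lo}}$ plus the extra time if the switch happens only after that. By the fixed-target bound (Theorem~\ref{thm:olea-fixed-target} with $\lambda=1$), reaching distance $D_{\mathrm{lo}}$ costs $O(n\log\frac{n}{D_{\mathrm{lo}}}) = O(n\log\log n)$. The window of $k$ failures adds no extra cost, because failed iterations are already counted by the fitness-level argument.

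Next we show that the switch occurs soon after distance $D_{\mathrm{lo}}$. At distance $d \le D_{\mathrm{lo}}$, the probability that a given level produces a run of $k$ failures is at least $(1-\frac{d}{n})^k \ge 1 - \frac{kd}{n} \ge 1 - \frac{C\log^2 n}{\log^c n}$. Taking $c = 3$, this is $1-o(1)$. Hence the expected number of further levels visited before the switch is $1+o(1)$. Each such level costs at most $O(\frac{n}{d}) + k$ iterations, which is $O(\log^c n + \log^2 n)$ in expectation. All of this is $o(n)$.

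A subtlety remains: the \oea might stay above distance $D_{\mathrm{lo}}$ forever because it switches earlier. That case is harmless, since it just means $D_s > D_{\mathrm{lo}}$ and the \oea phase is then bounded by the fixed-target time to reach $D_s$, which is at most the time to reach $D_{\mathrm{lo}}$.

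\textbf{Step 3 (combine).} Adding the two phases gives $O(n\log\log n) + o(n) + O(n\log\log n) + o(1)$, which is $O(n\log\log n)$.

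We expect the main obstacle to be the dependencies between the phases. The \ollga cost depends on the random $D_s$, and we must condition on $D_s \le D_{\mathrm{hi}}$ without distorting expectations. Since Theorem~\ref{thm:ollga-end} holds for any fixed starting distance and is monotone in $D$, conditioning on the switch state is valid by the strong Markov property. This reduces the argument to the tail bound in Step 1 together with a worst-case $O(n\log n)$ bound on that tail.
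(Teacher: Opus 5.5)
Your proof is correct and follows the same plan as the paper's: a union bound over levels with $(1-\frac{d}{en})^k$ rules out an early switch, fitness levels take the \oea down to an $n/\mathrm{polylog}(n)$ distance, the per-level switching probability is large below that, and Theorem~\ref{thm:ollga-end} bounds the \ollga phase, where your coupling of the \oea phase with a free-running \oea is a cleaner substitute for the paper's informal step of conditioning on no early switch and claiming stochastic domination. One small slip: below $D_{\mathrm{lo}}$ you should bound the time spent on a level simply by $k$ (a level is left after at most $k$ iterations, either by an improvement or by the switch), because $\frac{n}{d}$ is not $O(\log^{c} n)$ for $d < D_{\mathrm{lo}}$; the conclusion is unaffected.
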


Before we start the proof, we state and prove two auxiliary lemmas. The first lemma shows that we do not switch to the \ollga too early, and the second one shows that we do not switch to it too late.

\begin{lemma}
    \label{lem:no-early-switch}
    In the setting of Theorem~\ref{thm:switch-mech}, let $D \coloneqq \frac{2en\ln n}{k}$. Then we have $D = \Theta(\frac{n}{\log n})$ and the probability that the online algorithm selection mechanism switches to the \ollga before reaching distance $D$ from the optimum is at most $1/n$.
\end{lemma}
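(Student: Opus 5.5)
First, $D=\Theta(\frac{n}{\log n})$ follows directly from $k=\Theta(\lambda\log n)$ and $\lambda=\Theta(\log n)$: then $k=\Theta(\log^2 n)$, so $D=\frac{2en\ln n}{k}=\Theta(\frac{n}{\log n})$. For the probability bound, the plan is to show that at every distance $d> D$ the chance of $k$ consecutive failures is at most $1/n^2$, and then take a union bound over the fitness levels.

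Fix a distance $d>D$ with $d\ge 1$. In one iteration of the \oea at distance $d$, an improvement happens with probability at least the probability of flipping exactly one zero-bit and nothing else. This is
\[
  \frac{d}{n}\left(1-\frac{1}{n}\right)^{n-1}\ge\frac{d}{en}.
\]
The iterations are independent given the current individual, and the \oea stays at the same search point (same distance $d$) until it improves. So the probability that $k$ consecutive iterations at distance $d$ all fail is at most
\[
  \left(1-\frac{d}{en}\right)^k\le\exp\left(-\frac{dk}{en}\right)<\exp\left(-\frac{Dk}{en}\right)=\exp(-2\ln n)=n^{-2}.
\]
This is exactly why $D$ is defined with the constant $2e$.

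A switch before reaching distance at most $D$ requires that, at some distance $d>D$ visited by the \oea, a run of $k$ failing iterations occurs. The counter of consecutive failures resets on every improvement, and accepted equal-fitness offspring do not change the distance. Whether we count only strict improvements or accepted moves, any failure run happens at a fixed distance, and the bound above applies to each run. The \oea is elitist, so the distance is non-increasing and each distance $d\in(D,n]$ is occupied during at most one contiguous phase. At most $n$ distinct distances are visited, so a union bound over them gives total failure probability at most $n\cdot n^{-2}=1/n$. If the starting point is already within distance $D$, the claim is trivial.

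The main subtlety is making the union bound rigorous despite the random set of visited levels. I would handle this with a coupling or conditioning argument. For each distance $d$, consider the event that the first $k$ iterations after entering distance $d$ (if it is entered) all fail. By the strong Markov property, its probability conditioned on entering $d$ is at most $n^{-2}$. Summing over $d$ bounds the probability of the union.
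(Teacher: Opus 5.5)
Your proposal is correct and follows the same route as the paper. Like the paper, you bound the probability of $k$ consecutive failures at each distance $d > D$ by $\left(1-\frac{d}{en}\right)^k \le \exp\left(-\frac{dk}{en}\right) \le n^{-2}$, then take a union bound over at most $n$ distances. Your extra care in formalizing the union bound also makes explicit a step the paper only states informally: you condition on entering distance $d$, and you note that equal-fitness moves keep the distance and the uniform improvement bound $\frac{d}{en}$ unchanged.
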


\begin{proof}
    First, since $\lambda = \Theta(\log n)$, we have $k = \Theta(\log^2 n)$, hence $D = \Theta(\frac{n}{\log n})$.

    The probability that we switch at some distance $d$ from the optimum is the probability that the \oea has $k$ unsuccessful iterations in a row when it is in this distance $d$. For any $d \ge D$ this probability is at most
    \begin{align*}
        \left(1 - \frac{d}{en}\right)^k &= \left(1 - \frac{d}{en}\right)^{\frac{en}{d} \cdot \frac{dk}{en}} \le \exp\left(-\frac{dk}{en}\right) \\
        &\le \exp\left(-\frac{2en\ln n}{k} \cdot \frac{k}{en}\right) = \frac{1}{n^2}.
    \end{align*}
    By the union bound, the probability that the switch occurs before the \oea reaches distance $D$ is at most the sum of these probabilities over all distances $d \ge D$, that is, it is at most $1/n$.
\end{proof}

\begin{lemma}
    \label{lem:no-late-switch}
    In the setting of Lemma~\ref{lem:no-early-switch}, after the \oea finds an individual in distance at most $D$ from the optimum for the first time, the expected time until it switches to the \ollga is at most $O(n\log\log n)$.
\end{lemma}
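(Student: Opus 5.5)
We bound the time from the moment the \oea first reaches distance at most $D = \frac{2en\ln n}{k} = \Theta(\frac{n}{\log n})$ until the first run of $k$ consecutive unsuccessful iterations. The \oea never moves away from the optimum, so after this moment its distance $d$ only decreases. We split the remaining \oea phase at the threshold distance $d^* \coloneqq \frac{n}{k} = \Theta(\frac{n}{\log^2 n})$, which is where the expected waiting time $\frac{en}{d}$ for an improvement becomes comparable to $k$. We will also use that the switch certainly happens once the optimum is reached, since then no further improvement is possible.

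\textbf{Phase 1: from distance $D$ down to distance $d^*$.} Here we simply ignore the switching rule and use it only as an early stop, which can only shorten the phase. We invoke the fixed-target bound for the \oea (Theorem~\ref{thm:olea-fixed-target} with $\lambda = 1$, or directly the fitness-level argument in its proof). The expected time to go from distance at most $D$ to distance at most $d^*$ is at most $\sum_{d = d^*+1}^{D} \frac{en}{d} = O(n\log\frac{D}{d^*}) = O(n\log\log n)$, because $\frac{D}{d^*} = 2e\ln n$.

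\textbf{Phase 2: at distances $d \le d^*$.} At each such distance we look at the time spent before either an improvement or a switch. At distance $d$ the number of iterations until the first success is geometric with parameter $p_d \ge \frac{d}{en}$, but it is truncated at $k$ by the switching rule. Hence the expected time spent at distance $d$ is at most $\min\{\frac{en}{d}, k\}$. Summing over all $d \le d^*$ gives
\[
\sum_{d=1}^{d^*} \min\Big\{\frac{en}{d}, k\Big\} \le \sum_{d=1}^{d^*} k = d^* k = n.
\]
This sum is valid even though the process may leave the phase early by switching, since time not spent is simply dropped. The main subtlety is making sure the truncation bound is used correctly. The time at level $d$ is at most $\min\{\mathrm{Geom}(p_d), k\}$, because $k$ failures in a row at that level trigger the switch. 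Iterations at a level only restart the failure counter when an improvement occurs, and an improvement moves to a new level, so the failure counter at level $d$ starts from zero when the level is entered. Linearity of expectation over levels then gives the bound above.

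Combining the two phases, the expected time from first reaching distance $D$ until the switch is $O(n\log\log n) + O(n) = O(n\log\log n)$. If the process happens to enter Phase 1 already below $d^*$, Phase 1 is simply empty and the bound still holds. I expect the only delicate point to be the independence and truncation argument of Phase 2; I would handle it by conditioning on the level sequence and using the memorylessness of the \oea's success events at a fixed fitness level.
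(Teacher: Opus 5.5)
Your proof is correct. Its first phase is the same as the paper's: fitness levels from distance $D$ down to a threshold of order $n/\log^2 n$, where the paper takes $D' = n/\ln^2 n$ and you take $d^* = n/k$. The difference is in how you treat the regime below the threshold.

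The paper lower-bounds the probability of $k$ consecutive failures at any distance $d \le D'$ by $(1-d/n)^k = \Theta(1)$. It concludes that only $O(1)$ levels are visited in expectation before the switch, so the remaining \oea time is $O(k) = O(\log^2 n)$. You use only the truncation built into the switching rule: the time spent at distance $d$ is at most $\min\{en/d, k\}$, and paying $k$ for each of the at most $d^*$ levels below the threshold gives $d^* k = n$.

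Your route is more elementary. It needs no lower bound on the switching probability and no counting of visited levels, and the whole lemma reduces to the single estimate $\sum_{d=1}^{D}\min\{en/d,k\} = O(n\log\log n)$. The independence issue you flag is not actually delicate: linearity of expectation together with the Markov property at each level entry suffices.

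The paper's route gives a sharper conclusion. Once distance $\Theta(n/\log^2 n)$ is reached, the switch happens within $O(\log^2 n)$ further iterations rather than $O(n)$. So the \oea spends only negligible time in the coupon-collector regime, and the switching delay is quantified as insignificant.

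A minor point: if the \oea happens to reach the optimum, at most $k$ extra iterations at distance $0$ must be added to your sum, which changes nothing.
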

\begin{proof}
    Let $D' \coloneqq \frac{n}{\ln^2 n}$. Let $\tau$ be the time it takes the \oea that starts in distance at most $D$ from the optimum to find an individual that is in distance at most $D'$ from the optimum \emph{or} to make $k$ consequent unsuccessful iterations in a row. This time is dominated by the time it just takes the regular \oea to reach an individual in distance $D'$ from the optimum, which can be estimated via fitness levels technique, in the same way we did it in the proof of Theorem~\ref{thm:olea-fixed-target}. For each distance $d \in [D'..D]$ the probability $p_d$ to find an improvement is at least $d/(en)$, hence the total expected time it takes the \oea to reach distance $D'$ from distance $D$ (or smaller) is at most
    \begin{align*}
        E[\tau] &\le \sum_{d = D' + 1}^D \frac{1}{p_d} = en \sum_{d = D' + 1}^D \frac{1}{d} \le en \left(\ln \frac{D}{D'} + 1 \right) \\
        &= en\left(\ln(\Theta(\log n)) + 1\right) = \Theta(n \log\log n), 
    \end{align*}
    where we used an estimate of a part of the harmonic series from~\cite[Lemma~2.4]{AntipovBD25telo}, similar to the proof of Theorem~\ref{thm:olea-fixed-target}.  

    If after $\tau$ iterations the algorithm still has not switched to the \ollga, then its current individual $x$ is in distance $d \le D'$ from the optimum. For such $d$ the probability to have $k$ unsuccessful iterations in a row is at least the probability that mutation does not flip any of $d$ zero-bits in $k$ attempts, which is,
    \begin{align*}
        \left(1 - \frac{d}{n}\right)^k = \left(1 - \frac{d}{n}\right)^{\frac{n}{d} \cdot \frac{dk}{n}} = \left(e^{-1} - o(1)\right)^{O(1)} = \Theta(1),
    \end{align*}
    where we used $\frac{dk}{n} \le \frac{D'k}{n} = \frac{n}{\ln^2 n} \cdot \frac{\Theta(\log^2 n)}{n} = O(1)$.

    Consequently, at these distances $d \le D'$ the probability that the \oea performs $k$ unsuccessful iterations in a row before finding an improvement is at least constant. Hence, the expected number of distances the \oea visits before doing it (or finding the optimum) is also at most constant. In each distance it spends at most $k = \Theta(\log^2 n)$. Therefore, if the algorithm does not switch to the \ollga after $\tau$ iteration, it does it in $O(\log^2 n)$ iterations after that  (in expectation).

    In summary, the expected time it takes the OAS algorithm to switch to the \ollga after the \oea finds an individual in distance $D$ from the optimum is at most $E[\tau] + O(\log^2 n) = O(n\log\log n)$.
\end{proof}

With these two lemmas at hand, we prove Theorem~\ref{thm:switch-mech}.

\begin{proof}[Proof of Theorem~\ref{thm:switch-mech}]
    If the algorithm switches to the \ollga before reaching distance $D$, then the expected runtime is at most the sum of the expected runtimes of the \oea and the \ollga (each starting with a random solution), that is, $O(n\log n)$. Since by Lemma~\ref{lem:no-early-switch} this event happens only with probability $1/n$, its contribution to the expected total runtime is at most $O(\log n)$. 
    
    In the rest of the proof we condition on not switching before reaching distance $D$. We split the analysis into two parts: until the \oea reaches distance $D$ from the optimum and after that.

    \textbf{Part 1.} To prove the expected duration of this part, we can again use the fitness levels technique. We define a partition into levels, where each level corresponds to exactly one fitness value. For any particular fitness level $i$ the time that the \oea spends on this level conditional on not switching before reaching distance $D$ is dominated by the same unconditional time, since the condition implies that the algorithm does not spend more than $k$ iterations in this level. This means that this condition ``cuts'' the tail of the distribution of time, which leads to the domination. Therefore, the expected time it takes to leave a level which is in distance $d$ from the optimum is at most $\frac{en}{d}$. Summing this over all $d > D$, we have that the \oea reaches distance $D$ from the optimum in expected time that is at most
    \begin{align*}
        \sum_{d = D}^n \frac{en}{d} \le en \left(\ln\frac{n}{D} + 1\right) = en \ln(\Theta(\log n)) = \Theta(n \log\log n).
    \end{align*} 

    \textbf{Part 2.} After the algorithm reaches distance $D$, the condition that it has not switched to the \ollga does not affect the algorithm's behavior, and it spends the same number of iterations at each fitness value as it would if it just started in distance $D$ with the \oea. This allows us to apply Lemma~\ref{lem:no-late-switch}. Hence, the expected number of iterations that the \oea makes in this second part of optimization is at most $O(n \log\log n)$. Since we switch to the \ollga in distance at most $D$ from the optimum, by Theorem~\ref{thm:ollga-end} it takes it at most
    \begin{align*}
        O\left(\frac{n}{\lambda} \log D + D\lambda \right) = O\left(\frac{n}{\log n} \log n + \frac{n}{\log n} \log n\right) = O(n)
    \end{align*}
    expected fitness evaluations to find the optimum. Therefore, the second part of optimization takes at most $O(n \log\log n)$ time. 

    Combining the result for Parts 1 and 2, and also recalling the contribution of the runtime in case when the algorithm switches to the \ollga before reaching distance $D$, we conclude that the total runtime is at most $O(n \log\log n)$.
\end{proof}

We consider a very strict setting, where the OAS algorithm can only switch from the \oea to the \ollga once and has to proceed with the \ollga until the end of optimization. This restriction is justified by our goal---to show the capabilities of OAS even with such a small portfolio. However, allowing the OAS algorithm to switch back to the \oea (e.g., after the \ollga makes improvements $k'$ times in a row, which indicates that the progress is too easy to spend $2\lambda$ evaluations in each iteration) might allow to switch from the \oea to the \ollga more carelessly, for example after $k = \Theta(\lambda)$ failures of the \oea. Finding a good mechanism to switch back to the \oea and proving its effectiveness on a wider range of benchmarks than just \onemax would be a valuable complement of the results of this paper for practitioners.

\section{On a Hyper-heuristic Approach}
\label{sec:hh}

It is possible to extend the results of the previous section on the hyper-heuristic approach to switching between algorithms. Most of the previous studies of HHs involved only HHs that switch between different operators in the \oea. However, an iteration of the \olea or of the \ollga can be also considered as a more costly operator that creates an offspring, hence HH can also switch between algorithms. The main difference between the OAS approach and the HH approach is that the first aims at running each algorithm until it detects that it has become inefficient, while HHs usually iterate between operators using some strategy. Although this strategy can depend on the results of the previous iterations (which sometimes make HHs very similar to OAS), they usually decide which algorithm to use in every iteration following some pre-defined simple rules. This almost excludes different learning techniques that assess the efficiency of each operator (with exception being the HH studied in~\cite{DoerrLOW18}), but it also gives the HHs more flexibility and faster adaptivity (if their rules are chosen well).

We propose the following hyper-heuristic that at the start of each iteration chooses whether to run an iteration of the \oea or of the \ollga with population size $\lambda$. Initially it starts by running the \oea. In each next iteration it makes a decision as follows.
\begin{itemize}
    \item If in the previous $\lambda$ iterations there was no progress, it runs an iteration of the \ollga.
    \item Otherwise, it runs an iteration of the \oea.
\end{itemize}
In more simple words, in each fitness level this HH runs the \oea for $\lambda$ times or until it escapes this fitness level. If after that it is still on the same level, it starts performing iterations of the \ollga until it finds a better individual. The following theorem shows that this HH can also achieve the runtime of $O(n\log\log n)$ on \onemax.

\begin{theorem}
    The expected runtime of the described HH with $\lambda = \Theta(\log n)$ on \onemax is $O(n\log\log n)$.
\end{theorem}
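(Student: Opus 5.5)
We will bound the expected runtime with a fitness-levels argument, one level per fitness value. The cost of a level is the cost of its (at most $\lambda$) \oea iterations plus, if those fail, the cost of the \ollga iterations spent until the level is left. Let $d$ be the current distance to the optimum. The \oea phase on this level costs at most $\min\{\lambda, \Geom(p_d)\}$ evaluations, where $p_d \ge \frac{d}{en}$. Its expectation is therefore at most $\min\{\lambda, \frac{en}{d}\}$. The probability that all $\lambda$ attempts fail is at most $(1-\frac{d}{en})^\lambda \le \exp(-\frac{\lambda d}{en})$.

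If the \oea phase fails, the HH runs \ollga iterations until an improvement, each costing $2\lambda$ evaluations. We need a lower bound on the per-iteration success probability of the \ollga at distance $d$. The standard estimate behind Theorem~\ref{thm:ollga-end} (from \cite{DoerrDE15,AntipovBD25telo}) is that one iteration with $\ell\approx\lambda$ succeeds with probability $\Omega(\min\{1, \frac{\lambda^2 d}{n}\})$. We take this from \cite{AntipovBD25telo} rather than reprove it; it is the key ingredient behind their $O(\frac{n}{\lambda}\log D + D\lambda)$ bound. Hence the expected \ollga cost on level $d$, conditional on entering this phase, is $O(\lambda \max\{1, \frac{n}{\lambda^2 d}\}) = O(\lambda + \frac{n}{\lambda d})$.

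The expected cost at distance $d$ is then
\begin{align*}
    O\left(\min\left\{\lambda, \frac{n}{d}\right\} + \exp\left(-\frac{\lambda d}{en}\right)\left(\lambda + \frac{n}{\lambda d}\right)\right).
\end{align*}
We sum this over $d \in [1..n]$, splitting at $D_1 = \frac{n}{\lambda}$.
\begin{itemize}
    \item For $d \ge D_1$, the first term sums to $\sum \frac{en}{d} = O(n\log \lambda) = O(n\log\log n)$. The second term is at most $e^{-\lambda d/(en)}\cdot O(\lambda)$, a geometric-like series in $d$ with ratio $e^{-\lambda/(en)}$, hence $O(\lambda\cdot \frac{n}{\lambda}) = O(n)$.
    \item For $d < D_1$, the first term contributes $\lambda$ per level, $O(n)$ in total. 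The second term contributes $\sum_{d<n/\lambda} (\lambda + \frac{n}{\lambda d}) = O(n + \frac{n}{\lambda}\log n) = O(n)$, using $\lambda = \Theta(\log n)$.
\end{itemize}
Summing, the total is $O(n\log\log n)$. The initial distance is random, but the sum over all $d$ already covers every case.

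The main obstacle is justifying the \ollga per-iteration success probability $\Omega(\min\{1,\lambda^2 d/n\})$ for all $d$, including small $\lambda$ relative to $d$ and $d$ close to $n$. We will cite the corresponding lemma from \cite{AntipovBD25telo,DoerrDE15} directly. If that is unavailable in clean form, we will use the weaker bound $\Omega(\frac{\lambda d}{n})$, which holds whenever $\ell = 1$ type events are considered. That bound gives \ollga cost $O(\frac{n}{d})$ per level, weighted by $e^{-\lambda d/(en)}$. It yields $O(n\log n)$ only for small $d$, so the $\lambda^2$ gain is genuinely needed there.

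A secondary subtlety is that the \ollga may jump several levels, or accept an equal-fitness offspring. Elitism guarantees that fitness never decreases, so fitness levels still upper-bound the runtime. Since only strict improvements reset the HH's counter, each level's cost is charged exactly once.
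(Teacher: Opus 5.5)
Your proof is correct, and the summations in both ranges check out. It differs from the paper mainly in how the distances are split and how the late phase is handled.

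\textbf{The paper's route.} It splits at $d = n/\ln^2 n$. Above this threshold it bounds the cost at distance $d$ by $X\cdot(1+2Y)$, with $X \sim \min\{\Geom(\frac{d}{en}),\lambda\}$ and $Y$ the number of \ollga iterations. Independence then gives $E[X](1+2E[Y]) = O(\frac{n}{d})$. This step needs the \ollga success probability to be constant, which \cite[Lemma~7]{DoerrDE15} guarantees only for $d = \Omega(n/\lambda^2)$. Below the threshold the paper pessimistically ignores the \oea and invokes the fixed-start bound of Theorem~\ref{thm:ollga-end} from distance $n/\ln^2 n$, getting $O(n)$ for the \ollga. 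It then charges $\lambda\cdot\frac{n}{\ln^2 n}=o(n)$ for wasted \oea iterations.

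\textbf{Your route.} You write one per-level bound valid at every distance and split at $n/\lambda$ instead. You weight the \ollga cost by the failure probability $e^{-\lambda d/(en)}$. This is at least as tight as the paper's product trick, since $X=\lambda$ on failure gives $\lambda\Pr[\text{fail}]\le E[X]$. In the small-distance regime you use the full form $\Omega(\min\{1,\lambda^2 d/n\})$ of the \ollga success probability. That form is exactly what \cite[Lemma~7]{DoerrDE15} gives, the same lemma the paper cites, because $1-(1-\frac{d}{n})^{\lambda^2/2}\ge 1-e^{-\lambda^2 d/(2n)}$. So the ``main obstacle'' you flag is settled by that citation.

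\textbf{What each approach buys.} Your argument is self-contained and in effect re-derives the fixed-start bound level by level. This also covers a point the paper glosses over. Applying Theorem~\ref{thm:ollga-end} as a black box to \ollga iterations interleaved with \oea iterations implicitly requires a per-level or monotonicity argument anyway. The paper's proof is shorter because it reuses the known theorem and needs only the constant-probability case of the lemma.
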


\begin{proof}
    We will first show that this HH reaches distance $d \le \frac{n}{\ln^2 n}$ from the optimum in $O(n\log\log n)$ expected time, and in the rest of optimization it spends at most $O(n)$ more fitness evaluations.

    First, assume that the current distance to the optimum $d$ is at least $\frac{n}{\ln^2 n}$ and let $t_d$ be the expected number of fitness evaluations spent by the HH in this distance $d$ before it finds progress. To estimate $E[t_d]$, we will use slightly modified amortized analysis~\cite{Tarjan85}. First, let $X$ be the number of fitness evaluations performed by the \oea in this distance $d$. Let $Y$ be the number of iterations that the \ollga makes to find an improvement starting in distance $d$ from the optimum (conditional on that the \oea did not find an improvement). Then if we define the coin value as $Y' = 1 + 2Y$, then after running the \oea for at most $\lambda$ iterations, even if it does not achieve progress, we have enough budget to do so with the \ollga. Hence, the total cost of finding an improvement at distance $d$ is at most $X \cdot Y'$. To estimate $E[X \cdot Y']$ we note that $X$ and $Y'$ are independent, since by definition $Y'$ does not depend on whether the \oea achieved progress in its $\lambda$ attempts. Hence, $E[X \cdot Y'] = E[X]E[Y'] = E[X](1 + 2E[Y])$.
    
    Since the probability that the \oea finds an improvement in one iteration is $\frac{d}{en}$ (see the  proof of Theorem~\ref{thm:olea-fixed-target}), we have $X \sim \min\{\Geom(\frac{d}{en}), \lambda\}$, and hence $E[X] \le \frac{en}{d}$. By~\cite[Lemma~7]{DoerrDE15}, for some constant $C$, the probability that the \ollga finds an improvement in one iteration is at least
    \begin{align*}
        C\left(1 - \left(1 - \frac{1}{\ln^2 n}\right)^{\lambda^2/2}\right) = C (1 - e^{-\Omega(1)}) \ge C'
    \end{align*}
    for some constant $C'$ that depends on the value of $\lambda$. Hence, $E[Y] \le C'^{-1}$, and the expected time spent in distance $d$ from the optimum is at most $\frac{en}{d} (1 + 2C'^{-1})$. Summing up these times over all distances $d \ge \frac{n}{\ln^2 n}$ we obtain that the expected time until the HH finds an individual in distance $d < \frac{n}{\ln^2 n}$ is at most
    \begin{align*}
        \sum_{d = \lceil \frac{n}{\ln^2 n} \rceil}^n \frac{en}{d} (1 + 2C'^{-1}) = O\left(n \log\left(\frac{n}{n/\ln^2 n}\right) \right) = O(n\log\log n).
    \end{align*}
    
    In smaller distances $d < \frac{n}{\ln^2 n}$, even if we pessimistically assume that \oea never finds an improvement, the total time spent on the \ollga is at most $O(n)$, as we showed in the proof of Theorem~\ref{thm:switch-mech}. The total number of evaluations wasted on the \oea on these small distances is at most $\lambda \frac{n}{\log^2 n} = o(n)$. Hence, the total runtime is at most $O(n\log\log n)$.
\end{proof}


We note that in contrast with the setting of our OAS algorithm, this HH can switch to the \ollga after just $\lambda$ unsuccessful iterations of the \oea instead of $\Theta(\lambda \log n)$ that is required by the OAS algorithm. This is because the HH can switch back to the \oea after each successful iteration, which allows it to ignore unlucky events when it switches to the \ollga early. As a result of restricting the OAS algorithm from switching back to the \oea, it has to switch only when it is really sure that the \ollga will be more effective. Although it leads to some insignificant lag in switching, it avoids wasting fitness evaluations on the \oea iterations in the late stages of optimization. However, both these downsides do not affect our upper bounds on the runtime of both the HH and the OAS algorithm.  

\section{Discussion and Conclusion}

In this paper, we have demonstrated that an OAS algorithm can outperform the algorithms from its portfolio, even when the portfolio consists of only two algorithms. We proposed a mechanism to switch between them that is based on being sure that one of the algorithms is not effective anymore. This mechanism allows to solve \onemax in $O(n\log\log n)$ expected time when using the \oea and the \ollga: the performance that is asymptotically superior to any of these two algorithms working alone, even with the best tuned (but static) parameters. We also proposed a hyper-heuristic based on the same algorithms which shows the same performance gain.

Seeing this upper bound that is better than the $\Theta(n\log n)$ runtime of algorithms from the portfolio (or $\approx\Theta(n\sqrt{\log n})$, if we allow the \ollga to use differently tuned parameters) naturally raises the question what is the best runtime that we can achieve using $m$ different algorithms in the portfolio. Trivially, using $m \approx \sqrt{n}$ is enough to be able to simulate the self-adaptive \ollga that can solve \onemax in linear time, but what is the minimal portfolio size that achieves this runtime? And what is the best performance gain we can get by adding algorithms to the portfolio? This question is important in real-world applications, since, on the one hand, the practitioners want to use a sufficiently large portfolios of algorithms that would allow the best possible performance in any situation, but, on the other hand, large portfolios make it significantly more difficult to quickly (re-)learn the effectiveness of each algorithm without spending too much time. Addressing these questions in further theoretical research would be extremely useful for the algorithm users. From practical perspective, it is also important to verify if the proposed OAS methods can be used on practical problems with more complex landscapes. 

One of the essential elements of our analysis was the fixed-target results shown in Section~\ref{sec:fixed-target}. Together with the fixed-start results from~\cite{AntipovBD25telo} they provided us enough information about behavior of each algorithm at the beginning and at the end of optimization. For analysis of larger portfolios it is necessary to extend these tools to be able to prove ``fixed-start-fixed-target'' results. This seems to be a trivial task for asymptotical upper bound as the ones we showed in Theorem~\ref{thm:olea-fixed-target}, but proving the tight bounds with leading constants might be much more challenging, especially for population-based EAs. However, the outcome of such results will give us a solid foundation for analysis of more complex OAS.

Finally, we address the tightness of our $O(n\log\log n)$ bound shown for multiple settings. Although we do not use any tools that provide lower bounds, we note that our fitness-level arguments used in Theorem~\ref{thm:olea-fixed-target} are such that it is unlikely that the algorithm skips a super-constant fraction of levels, hence following the arguments of the fitness-levels method for lower bounds~\cite{DoerrK24} it is natural to say that our bound should be tight. The same argument suggests that the bounds from~\cite{AntipovBD25telo} that we use are also asymptotically tight. Hence, our bound should also be tight. However, despite such simple intuition, proving it would require more effort, and we leave it for further studies.

\begin{acks}
We acknowledge funding by the European Union (ERC, ``dynaBBO'', grant no.~101125586). This research was also jointly funded by the French National Research Agency (ANR-23-CE23-0035) and the German Research Foundation (DFG; LI 2801/7-1), through project \textsc{Opt4DAC}.
\end{acks}

\bibliographystyle{ACM-Reference-Format}
\bibliography{bibliography}

\end{document}